\pdfoutput=1 

\documentclass[a4paper,11pt,reqno]{amsart}
\usepackage{lmodern}
\usepackage{hyperref}
\usepackage{times}
\usepackage{todonotes}
\usepackage{xcolor}
\usepackage{amsthm}                
\usepackage{graphics}
\usepackage{graphicx}
\usepackage{latexsym}
\usepackage{tikz}
\usepackage{mathdots}

\usepackage{hyperref}
\usepackage{comment}

\usepackage[lite]{amsrefs}

\newtheorem{theorem}{Theorem}[section]

\newtheorem{proposition}[theorem]{Proposition}

\theoremstyle{definition}
\newtheorem{definition}[theorem]{Definition}

\theoremstyle{remark}
\newtheorem{remark}[theorem]{Remark}
\numberwithin{equation}{section}

\usepackage{wrapfig,lipsum,booktabs}
\usepackage{amssymb}
\usepackage{amsthm}    
\usepackage{amsmath}
\usepackage{amsfonts}
\usepackage{comment}
\usepackage{enumerate}
\usepackage{graphicx}
\usepackage{wrapfig}

\usepackage{amssymb}

\usepackage{comment}
\usepackage{enumerate}

\usepackage{float}

\usepackage{graphics}
\usepackage{graphicx}
\usepackage{hyperref}

\usepackage{latexsym}
\usepackage{mathdots}
\usepackage{pict2e}
\usepackage[algoruled,linesnumbered,vlined,noline,noend]{algorithm2e}
\usepackage{MnSymbol} 

\bibliographystyle{ams}

\begin{document}

\title{Parallel Mapper}

\author{Mustafa Hajij}
\address{KLA Corporation, USA}
\email{mustafahajij@gmail.com}

\author{Basem  Assiri} 
\address{Department of Computer Science, 
Jazan University, Jazan City, Saudi Arabia} 
\email{babumussmar@jazanu.edu.sa}

\author{Paul Rosen}
\address{University of South Florida, Tampa, FL 33620, U.S.A.}
\email{prosen@usf.edu}

\maketitle

\begin{abstract} The construction of Mapper has emerged in the last decade as a powerful and effective topological data analysis tool that approximates and generalizes other topological summaries, such as the Reeb graph, the contour tree, split, and joint trees. In this paper we study the parallel analysis of the construction of Mapper. We give a provably correct parallel algorithm to execute Mapper on a multiple processors and discuss the performance results that compare our approach to a reference sequential Mapper implementation. We report the performance experiments that demonstrate the efficiency of our method. 
\keywords{Mapper, Topological Data Analysis}
\end{abstract}

\section{Introduction and Motivation}
\label{sec-topology_of_data}

The topology of data is one of the fundamental originating principle in studying data. Consider the classical problem of fitting data set of point in $\mathbb{R}^n$ using linear regression. In linear regression one usually assumes that data is almost distributed near a hyperplane in $\mathbb{R}^n$. See Figure \ref{linear} (a). If the data does not meet this assumption then the model chosen to fit the data may not work very well.

\begin{figure}[H]
  \centering
   {\includegraphics[width=9cm,height=5cm,keepaspectratio]{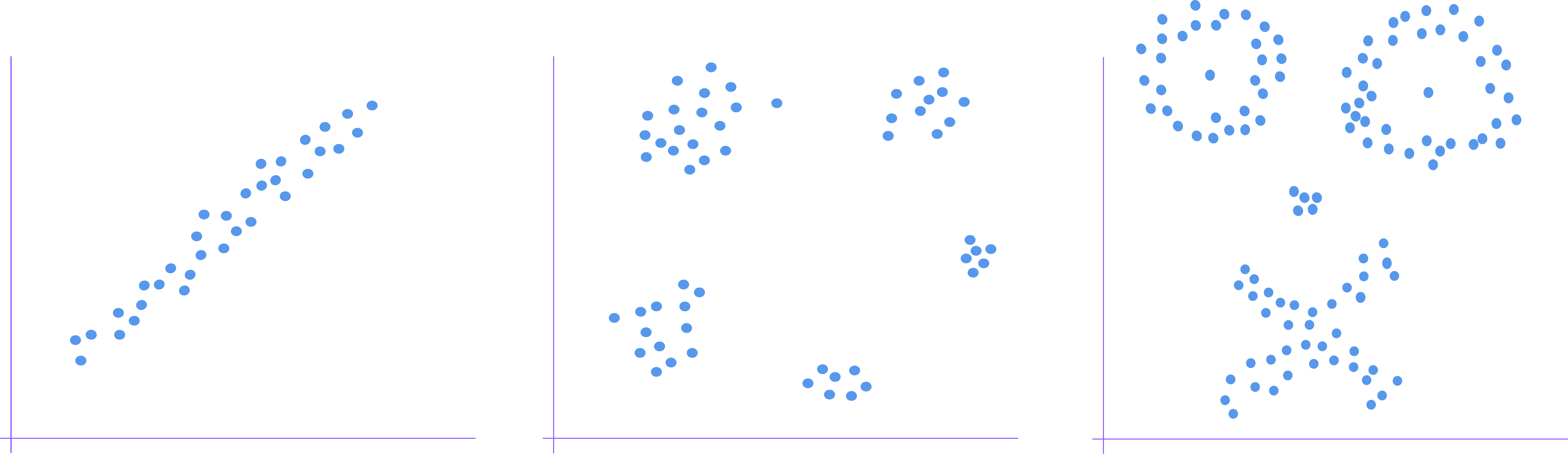}
          \put(-125,-10){(b)}
          \put(-35,-10){(c)}
           \put(-210,-10){(a)}
    \caption{(a) The linear shape of the data is a fundamental assumption underlying the linear regression method. (b) Clustering algorithms assume that the data is clustered in a certain way. (c) Data can come in many other forms and shapes.}
  \label{linear}}
\end{figure}

 On the other hand, a clustering algorithm normally makes the shape assumption that the data falls into clusters. See Figure \ref{linear} (b). Data can come in many other forms and shapes, see Figure  \ref{linear} (c). It is the \textit{shape of data} \cite{carlsson2009topology} that drives the meaning of these analytical methods and determines the successfulness of application of these methods on the data. 


Topology is the field in Mathematics that rigorously defines and studies the notion of shape. Over the past two decades, topology has found enormous applications in data analysis and the application of topological techniques to study data is now considered a vibrant area of research called as \textit{Topological Data Analysis} (TDA) \cite{carlsson2006algebraic,carlsson2009topology,carlsson2008local,carlsson2008persistent,carlsson2005persistence,carlsson2009theory,collins2004barcode}. Many popular tools have been invented in the last two decades to study the shape of data, most notably \textit{Persistent Homology}  \cite{edelsbrunner2000topological,robins1999towards} and the \textit{construction of Mapper} \cite{singh2007topological}. Persistent Homology has been successfully used to study a wide range of data problems including three-dimensional structure of the DNA \cite{emmett2016multiscale}, financial networks \cite{gidea2017topology}, material science \cite{hiraoka2016hierarchical} and many other applications \cite{otter2017roadmap}. The construction of  Mapper has emerged recently as a powerful and effective topological data analysis tool to solve a wide variety of problems \cite{lum2013extracting,nicolau2011topology,robles2017shape} and it has been studied from multiple points of view \cite{carriere2015structure,dey2017topological,munch2015convergence}. Mapper works as a tool of approximation of a topological space by mapping this space via a "lens", or a sometimes called a filter, to another domain. One uses properties of the lens and the codomain to then extract a topological approximation of the original space. We give the precious notion in Section \ref{sec:Preliminaries and Definitions}. Mapper generalizes other topological summaries such as the Reeb graph, the contour tree, split, and joint trees. Moreover, Mapper is the core software developed by Ayasdi, a data analytic company whose main interest is promoting the usage of methods inspired by topological constructions in data science applications.  

  As the demand of analyzing larger data sets grows, it is natural to consider parallelization of topological computations. While there are numerous parallel algorithms that tackle the less general topological constructions, such as Reeb graph and contour tree, we are not aware of similar attempts targeting the parallel computation of Mapper in the literature. Our work here is an attempt to fill in this gap.  

This article addresses the parallel analysis of the construction of Mapper. We give a provably correct algorithm to distribute Mapper on a set of processors and discuss the performance results that compare our approach to
a reference sequential implementation for the computation of Mapper. Finally, we report the performance analysis experiments that demonstrate the efficiency of our method.






 
\section{Prior Work}

While there are numerous algorithms to compute topological constructions sequentially, the literature of parallel computing in topology is relatively young.  One notable exception is  parallelizing Morse-Smale complex computations \cite{gyulassy2012parallel,shivashankar2012parallel}. Parallelization of merge trees is studied in \cite{gueunet2017task,morozov2013distributed,pascucci2004parallel,rosen2017hybrid}. Other parallel algorithms in topology include multicore homology computation \cite{lewis2014multicore} spectral sequence parallelization \cite{lipsky2011spectral}, distributed contour tree \cite{morozov2012distributed}. There are several other attempts to speed up the serial computation of topological constructions including an optimized Mapper sequential algorithm for large data \cite{snavsel2017geometrical}, a memory efficient method to compute persistent cohomology \cite{boissonnat2015compressed},  efficient data structure for simplicial complexes \cite{bauer2017phat}, optimized computation of persistent homology \cite{chen2011persistent} and Morse-Smale complexes \cite{gunther2012efficient}.


\section{Preliminaries and Definitions}
\label{sec:Preliminaries and Definitions}
We start this section by recall basic notions from topology. For more details the reader is referred to standard texts in topology. See for instance \cite{munkres1984elements}. All topological spaces we consider in this paper will be compact unless otherwise specified. An \textit{open cover} of a topological space is a collection of open sets $\mathcal{U}=\{A_{\alpha} \}_{\alpha \in \mathcal{I} }$ such that $\cup_{\alpha \in \mathcal{I}} A_{\alpha}=X$. All covers in this article will consist of a finite number of sets unless otherwise specified. Given a topological space $X$ with a cover $\mathcal {U}$, one may approximate this space via an abstract simplicial complex construction called the \textit{nerve} of the cover $\mathcal{U}$. The nerve of a cover is a simplicial complex whose vertices are represented by the open sets the cover. Each non-empty intersection between two sets in the cover defines an edge in the nerve and each non-empty intersection between multiple sets defines higher order simplicies. See Figure \ref{nerve1212} for an illustrative example. Under mild conditions the nerve of a cover can be considered as an approximation of the underlying topological space. This is usually called the \textit{Nerve Theorem} \cite{ghrist2008barcodes}. The Nerve Theorem plays an essential role in TDA: it gives a mathematically justified approximation of the topological space, being thought as the data under study, via simplicial complexes which are suitable for data structures and algorithms. In \cite{singh2007topological} Singh et al proposed using a continuous map $f:X\longrightarrow Z$ to construct a nerve of the space $X$. Instead of covering $X$ directly, Singh et al suggested covering the codomain $Z$ and then use the map $f$ to pull back this cover to $X$. This perspective has multiple useful points of view. On one hand, choosing different maps  on $X$ can be used to capture different aspects of the space $X$. In this sense the function $f$ is thought of as a "lens" or a "filter" in which we view the space $X$. On the other hand, fixing the map $f$ and choosing  different covers for the codomain $Z$ can be used to obtain multi-level resolution of the Mapper structure. This has been recently studied in details in \cite{dey2016multiscale,dey2017topological} and utilized to obtain a notion of persistence-based signature based on the definition of Mapper.

The Mapper construction is related to Reeb graphs. To illustrate relationship, we give the following definition.

\begin{figure}[h]
  \centering
   {\includegraphics[width=12cm,height=6cm,keepaspectratio]{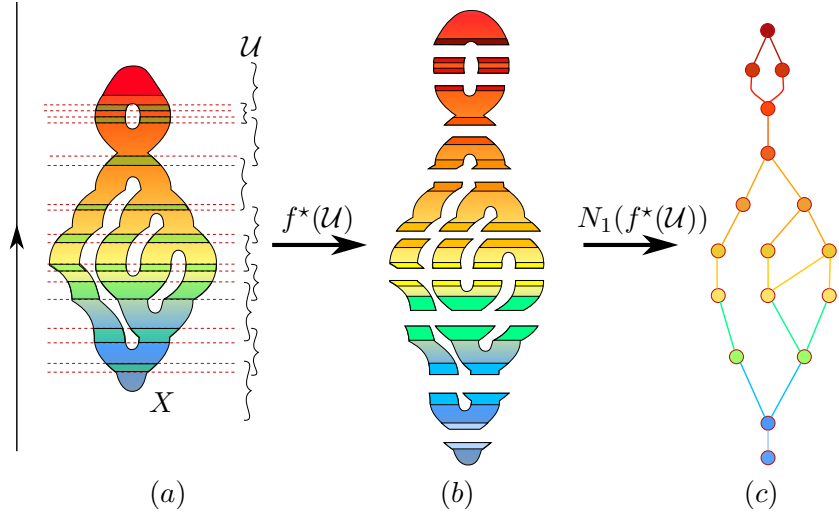}
      \put(-98,90){$N_1(f^{\star}(\mathcal{U}))$}
    \put(-210,90){$f^{\star}(\mathcal{U})$}
     \put(-225,155){$\mathcal{U}$}
     \put(-260,20){$X$}
     \put(-260,-14){$(a)$}
     \put(-150,-14){$(b)$}
     \put(-35,-14){$(c)$}
  }
  \caption{(a) Given a scalar function $f:X \longrightarrow [a,b]$ and an open cover $\mathcal{U}$ for $[a,b]$ we obtain an open cover $f^{\star}(\mathcal{U})$ for the space $X$ by considering the inverse images of the elements of $\mathcal{U}$ under $f$. (b) The connected-components of the inverse images are identified as well as the intersection between these sets. (c) Mapper is defined as a graph whose vertices represent the connected component and whose edge represent the intersection between these components.}
  \label{nerve_1}
\end{figure}

\begin{definition}
\label{nerve}
Let $X$ be a topological space and let $\mathcal{U}$ be an open cover for $X$. The $1$-\textbf{nerve} $N_1(\mathcal{U})$ of $\mathcal{U}$ is a graph whose nodes are represented by the elements of $\mathcal{U}$ and whose edges are the pairs ${A,B}$ of $\mathcal{U}$ such that $A\cap B \neq \emptyset$.
\end{definition}
\begin{figure}[h]
  \centering
   {\includegraphics[width=9cm,height=5cm,keepaspectratio]{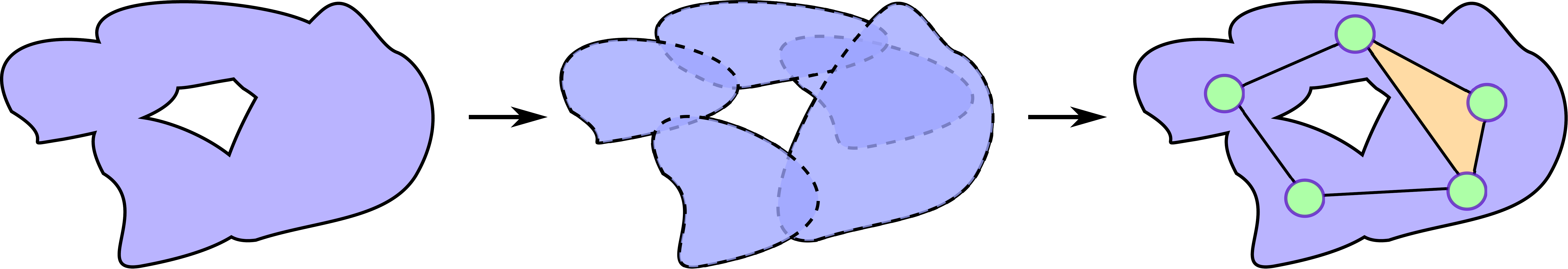}
   \put(-22,-4){$N(\mathcal{U})$}
    \put(-100,-4){$\mathcal{U}$}
     \put(-190,-4){$X$}
  }
  \caption{ Each open set defines a vertex in the nerve simplicial complex. Each intersection between two sets define an edge and intersection between multiple sets define higher order simplicies.}
  
  \label{nerve1212}
\end{figure}
A scalar function $f$ on $X$ and a cover for the codomain $[a,b]$ of $f$ give rise to a natural cover of $X$ in the following way. Start by defining an open cover for the interval $[a,b]$ and take the inverse image of each open set to obtain an open cover for $X$. This is illustrated in Figure \ref{nerve_1}~(a). In other words if $ \mathcal{U}=\{(a_1,b_1),...,(a_n,b_n)\}$ is a finite collection of open sets that covers the interval $[a,b]$ then $f^{\star}(\mathcal{U}):=\{f^{-1}((a_1,b_1)),...,f^{-1}((a_n,b_n))\}$ is an open cover for the space $X$.
The open cover $f^{\star}(\mathcal{U})$ can now be used to obtain the $1$-nerve graph $N_1(f^{\star}(\mathcal{U}))$. With an appropriate choice of the cover $\mathcal{U}$, the graph $N_1(f^{\star}(\mathcal{U}))$ is a version of the Reeb graph $R(X,f)$ \cite{carriere2015structure,munch2015convergence}.  This is illustrated in Figure \ref{nerve_1}.

%

Observe that the different covers for $[a,b]$ give various ``resolution'' of the graph $N_1(f^{\star}(\mathcal{U}))$. The idea of mapper presented in Definition \ref{nerve} can be generalized to encompass a larger set of problems. One can replace the interval $[a,b]$ in Definition \ref{nerve} by any parametization domain $Z$ to obtain more sophisticated insights on the data $X$. This requires introducing the definition of a nerve of a cover of a topological space.
\begin{definition}
Let $X$ be a topological space and let $\mathcal{U}$ be a finite cover for $X$. The nerve of $\mathcal{U}$ is the abstract simplicial complex $N(\mathcal{U})$ whose vertices are the elements of  $\mathcal{U}$ and whose simplicies are the finite subcollections ${A_1,....,A_k}$ of  $\mathcal{U}$ such that : $A_1 \cap ... \cap A_k \neq\emptyset$.

\end{definition}
In this paper we will deal with nerves of multiple topological spaces simultaneously. For this reason we will sometimes refer to the nerve of a cover $\mathcal{U}$ of a space $X$ by $N(X,\mathcal{U})$. Figure \ref{nerve1212} shows an illustrative example of nerve on a topological space $X$. We will denote the vertex in $N(\mathcal{U})$ that corresponds to an open set $A$ in $\mathcal{U}$ by $v_{A}$.

Let $f:X\longrightarrow
Z$ be a continuous map between two topological spaces $X$ and $Z$. Let $\mathcal{U}$ be a  finite cover of $Z$. The cover that consists of $f^{-1}(U)$ for all open sets $U \in \mathcal{U}$ will be called the \textit{pullback} of $\mathcal{U}$ under $f$ and will be denoted by $f^{*}(\mathcal{U})$. A continuous map $f:X\longrightarrow
Z$ is said to be \textit{well-behaved} if the inverse image of any path-connected set $U$ in $Z$, consists of finitely many path-connected sets in $X$ \cite{dey2016multiscale}. All maps in this paper will be assumed to be well-behaved. 
\begin{definition}
Let $f:X\longrightarrow Z$ be a continuous map between two topological space $X$ and $Z$. Let $\mathcal{U}$ be a finite cover for $Z$. The Mapper of $f$ and $\mathcal{U}$, denoted by $M(f,\mathcal{U})$, is the nerve $N(f^{*}{\mathcal{U}})$.
\end{definition}


\subsection{Some Graph Theory Notions}
Our construction requires a few definitions from graph theory. We include these notions here for completeness. See \cite{beineke2004topics} for a more thorough treatment.
\begin{definition}
Let $G=(V,E)$ be a graph. Let $\sim$ be an equivalence relation defined on the node set $V$. The quotient graph of $G$ with respect to the equivalence relation is a graph $G/\sim$ whose node set is the quotient set $V/\sim$ and whose edge set is $\{ ([u],[v])| (u,v)\in E \}$. 
\end{definition}
\begin{wrapfigure}[6]{r}{0.49\textwidth}
	\centering
    \vspace{-10pt}
	\includegraphics[width=6cm,height=3cm,keepaspectratio]{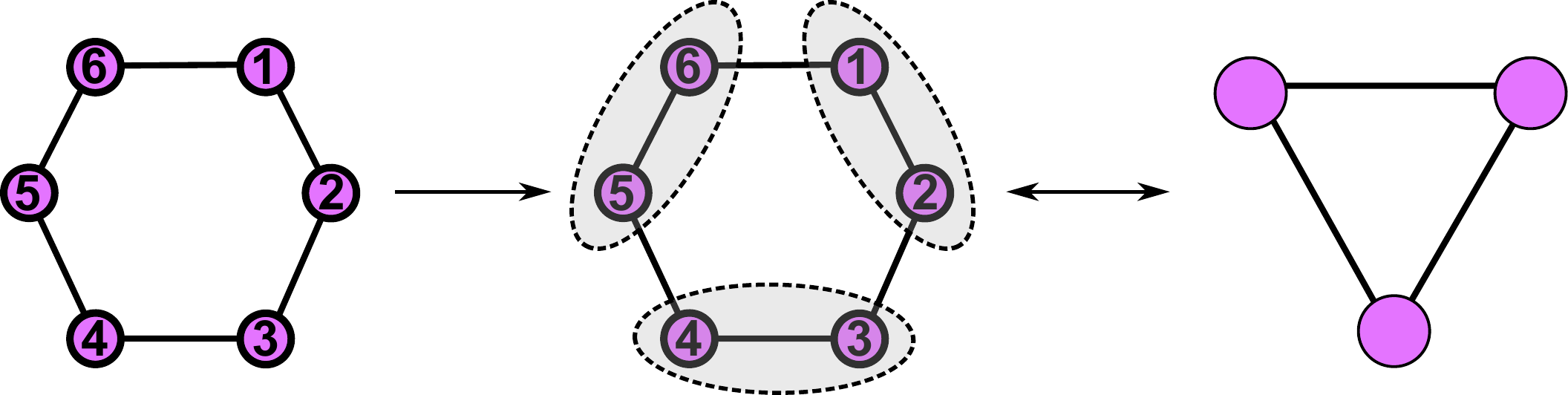}
	\caption{An example of a quotient graph.}
	\label{quotient graph}
\end{wrapfigure} 

For example consider the cyclic graph $C_6$ with $V=\{1,2,3,4,5,6\}$ and edges $(1,2)$,(2,3),...,$(6,1)$. Define the partition $\sim$ on $V$ by $p_1=\{1,2\}$, $p_2=\{3,4\}$ and $p_3=\{5,6\}$. The quotient graph induced by $\sim$ is the cyclic graph $C_3$. See Figure \ref{quotient graph}.


We will also need the definition of disjoint union of two graphs. We will denote to the disjoint union of two sets $A$ and $B$ by $ A \sqcup B$.
\begin{definition}
Let $G_1=(V_1,E_1)$ and $G_2=(V_2,E_2)$ be two graphs. The disjoint union of $G_1$ and $G_2$ is the graph $G_1 \sqcup G_2$ defined by $(V_1 \sqcup V_2,E_1 \sqcup E_2)$.
\end{definition}

\section{Parallel Computing of Mapper}
\label{Parallel Mapper Algorithm}



The idea of parallelizing the computation of Mapper lies in decomposing the space of interest into multiple smaller subspaces. The subspaces will be chosen to overlap on a smaller portion to insure a meaningful merging for the individual pieces. A cover of each space is then chosen. Each subspace along with its cover is then processed independently by a processing unit. The final stage consists of gathering the individual pieces and merging them together to produce the final correct Mapper construction on the entire space.

Let $f:X\longrightarrow [a,b]$ be a continuous function. The construction of parallel Mapper on two units goes as follows:

\begin{enumerate}

\item Choose an open cover for the interval $[a,b]$ that consists of exactly two sub-intervals $A_1$ and $A_2$ such that  $A:=A_1 \cap A_2 \neq \emptyset $.  See Figure \ref{dis_alg} (a).

\item

Choose open covers $\mathcal{U}_1$ and $\mathcal{U}_2$ for $A_1$ and $A_2$ respectively that satisfy the following conditions. First we want the intersection of the two coverings $\mathcal{U}_1$ and $ \mathcal{U}_1$ to have only the set $A$. Furthermore we do not want the covers $\mathcal{U}_1$ and $\mathcal{U}_2$ to overlap in anyway on any open set other than $A$.  

\item We compute the Mapper construction on the covers $f^{*}(\mathcal{U}_i)$ for $i=1,2$. We obtain two graphs $G_1$ and $G_2$. See Figure \ref{dis_alg} (b).
\item We merge the graphs $G_1$, $G_2$ as follows. By the construction of $A$, $\mathcal{U}_1$ and $\mathcal{U}_2$, the set $A$ exists in both covers $\mathcal{U}_i$, i=1,2. Let $C_1 ,..., C_n $ be the path-connected components of  $f^{-1}(A)$. Since $A$ appears in both of the covers then every connected component $C_i$ in $f^{-1}(A)$ occurs in both graphs $G_1$ and $G_2$. In other words, the nodes $v_1,...,v_n$ that correspond to the components $C_1 ,..., C_n $ occur in both $G_1$ and $G_2$  where each vertex $v_i$ corresponds to the set $C_i$. The merge of the graph is done by considering the disjoint union $G_1 \sqcup  G_2$ and then take the quotient of this graph by identifying the duplicate nodes $v_1,...,v_k$ presenting in both $G_1$ and $G_2$.  See Figure \ref{dis_alg} (c).
\end{enumerate}
The steps of the previous algorithm are summarized in Figure \ref{dis_alg}.
\begin{figure}[h]
  \centering
   {\includegraphics[width=9cm,height=5cm,keepaspectratio]{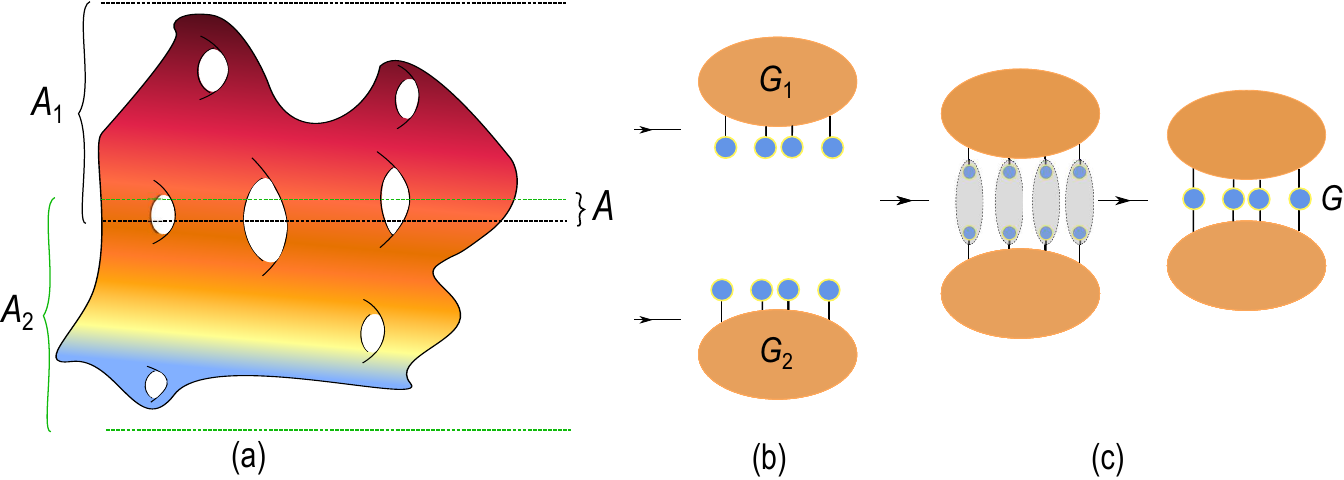}
    \caption{The steps of the parallel Mapper on two units. (a) The space $X$ is decomposition based on a decomposition of the codomain (b) Each part is sent to a processing unit and the Mapper graphs are computed on the subspaces (c) The graphs are merged by identifying the corresponding the nodes.}
  \label{dis_alg}}
\end{figure}


\begin{remark}
Note that the interval $[a,b]$ in the construction above can be replaced by any domain $Y$ and the construction above remains valid. However for the purpose of this paper we restrict ourselves to the simplest case when $Y=[a,b]$. 
\end{remark}
 
Now define an $N$-chain cover of  $[a,b]$ to be a cover $\mathcal{U}$ of $[a,b]$ that consists of $N$ open intervals $A_1,...,A_N$ such that $A_{i, j}:=A_i \cap A_{j} \neq  \emptyset$ when $|i-j|=1$ and empty otherwise. By convention, a 1-chain cover for an interval $[a,b]$ is any open interval that contains $[a,b]$.



\section{The Design of the Algorithm}
\label{main_algorithm}
In this sectionو we discuss the computational details of the parallel Mapper algorithm that we already explained in the previous section from the topological perspective. Before we give our algorithm we recall quickly the reference sequential version.

\subsection{The Sequential Mapper Algorithm}
The serial Mapper algorithm can be obtained by a straightforward change of terminology of the topological mapper introduced in Section \ref{sec:Preliminaries and Definitions}. To this end, the topological space $X$ is replaced by the data under investigation. The lens, or the filter, $f$ is chosen to reflect a certain property of the data. Finally, the notion of path-connectedness is replaced by an appropriate notion of clustering. This is summarized in the Algorithm \ref{seq}. Note that we will refer the mapper graph obtained using Algorithm \ref{seq} by the \textit{sequential Mapper}.  

\begin{algorithm}[h]
		\KwIn{A dataset $X$ with a notion of metric between the data points\; a scalar function $f :X\longrightarrow \mathbb{R}^n$\; a finite cover $\mathcal{U}=\{U_1,...,U_k\} $ of $f(X)$;}
		\KwOut{ A graph that represents $N_1(f^{\star}(\mathcal{U}))$.} 
		\vspace{1mm}
For each set $X_i:=f^{-1}(U_i)$, its clusters $X_{ij} \subset X_i$ are computed using the chosen clustering algorithm.\;
Each cluster is considered as a vertex in the Mapper graph. Moreover we insert an edge between two nodes $X_{ij}$ and $X_{kl}$ whenever $X_{ij} \cap X_{kl} \neq \emptyset$;
		\caption{Sequential Mapper \cite{singh2007topological}}
        \label{seq}
\end{algorithm}

\subsection{The Main Algorithm}
We now give the details of the parallel Mapper algorithm.
To guarantee that the output of the parallel Mapper is identical to that of the sequential Mapper we need to perform some processing on the cover that induces the final parallel Mapper output. In parallel Mapper, we consider an $N$-chain cover of open intervals $A_1,\cdots, A_N$ of the interval $[a,b]$ along with the their covers $\mathcal{U}_1$,...,$\mathcal{U}_N$. The details of the cover preprocessing are described in Algorithm \ref{preprocessing}.


\begin{algorithm}[h]
		\KwIn{A point cloud $X$\;
        a scalar function $f :X\longrightarrow [a,b]$\;
        a set of $N$ processors ($\mathcal{P}$)\;}
		\KwOut{ A collection of pairs $\{(A_i,\mathcal{U}_i)\}_{i=1}^N$ where $\{A_i\}_{i=1}^N$ is an $N$-chain cover of $[a,b]$ and $\mathcal{U}_i$ is a cover of $A_i$.} 
		\vspace{1mm}
Construct an $N$-chain cover of  $[a,b]$. That is, cover $[a,b]$ by $N$ open intervals $A_1,\cdots, A_N$ such that $A_{i, j}:=A_i \cap A_{j} \neq  \emptyset$ when $|i-j|=1$ and empty otherwise\;
For each open set $A_i$ construct an open cover $\mathcal{U}_i$. The covers $\{\mathcal{U}_i \}_{i=1}^N$ satisfy the following conditions: (1) $A_{i,i+1}$ is an open set in both coverings $\mathcal{U}_i$ and $\mathcal{U}_{i+1}$. In other words $\mathcal{U}_i \cap \mathcal{U}_{i+1}=\{A_{i, i+1}\}$ and (2) if  $U_i\in \mathcal{U}_i$ and $U_{i+1}\in \mathcal{U}_{i+1}$ such that $U_i\cap U_{i+1} \neq \emptyset$ then $U_i\cap U_{i+1}=A_{i,i+1}$ for each $i=1,...,N-1$\;
		\caption{Cover Preprocessing}
        \label{preprocessing}
\end{algorithm}

\begin{algorithm}[H]
		\KwIn{A point cloud $X$\;
        a scalar function $f :X\longrightarrow [a,b]$\;
        a set of $N$ processors ($\mathcal{P}$)\;
a collection of pairs $\{(A_i,\mathcal{U}_i)\}_{i=1}^N$ obtained from the cover preprocessing algorithm\;       
        }
		\KwOut{Parallel Mapper Graph.} 
		\vspace{1mm}
            \For{( $i \gets 1$ to $i=N$)}
{$P_i \gets (A_i,\mathcal{U}_i)$; //Map each $A_i$, and its cover $\mathcal{U}_i$ to the processor $P_i.$}
Determine the set of point $X_i \subset X$ that maps to $A_i$ via $f$ and run the sequential Mapper construction concurrently on the covers $(f|_{X_i})^{*}(\mathcal{U}_i)$ for $i=1,..,N$. We obtain $N$ graphs $G_1,...G_N$. If $N=1$, return the graph $G_1$\;
Let $C^i_{j_1},...,C^i_{j_i} $ be the  clusters obtained from  $f^{-1}(A_{i, i+1})$. These clusters are represented by the vertices $v^i_{j_1},...,v^i_{j_i}$ in both $G_i$ and $G_{i+1}$ (each vertex $v^i_k$ corresponds to the cluster $C^i_k$) by the choice of the coverings $\mathcal{U}_i$ and $\mathcal{U}_{i+1}$\;
Merge the graphs $G_1,..., G_N$ as follows. By the construction of $A_{i, i+1}$, $\mathcal{U}_i$ and $\mathcal{U}_{i+1}$, each one of the sets $f^*(\mathcal{U}_i)$ and $f^*(\mathcal{U}_{i+1})$ share the clusters $C^i_{j_k}$ in $f^*(A_{i,i+1})$ . Hence $C^i_{j_k}$ is represented by a vertex in both graphs $G_i$ and $G_{i+1}$. The merging is done by considering the disjoint union graph $G_1 \sqcup ...\sqcup G_N$ and then take the quotient of this graph that identifies the corresponding vertices in $G_i$ and $G_{i+1}$ for $1\leq i \leq N-1$.	       
		\caption{Parallel Mapper}
        \label{mapper on N}
\end{algorithm}

After doing the preprocessing of the cover and obtaining the collection $\{(A_i,\mathcal{U}_i)\}_{i=1}^N$, every pair $(A_i,\mathcal{U}_i$) is mapped to a specific processor $P_i$ which performs some calculations to produce a subgraph $G_i$. At the end, we merge the subgraphs into one graph $G$. The details of the algorithm are presented in Algorithm \ref{mapper on N}.

\subsection{Correctness of the Algorithm}

In here, we give a detailed proof of the correctness of parallel Mapper that discusses the steps of the algorithm.

\begin{proposition}
\label{correctness}
The parallel Mapper algorithm returns a graph identical to the sequential Mapper.
\end{proposition}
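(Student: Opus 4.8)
The plan is to prove the proposition by induction on the number $N$ of processors, reducing the merge of $N$ subgraphs to $N-1$ successive applications of Corollary \ref{main corollary} (equivalently Theorem \ref{induction step}). First I would make precise the object being compared: the ``sequential Mapper'' here is $N_1(f^{*}(\mathcal{U}))$ computed from the single combined cover $\mathcal{U} := \mathcal{U}_1 \cup \cdots \cup \mathcal{U}_N$, which is a legitimate cover of $f(X)$ because the $N$-chain $A_1,\dots,A_N$ covers $[a,b]$ and each $\mathcal{U}_i$ covers $A_i$. I would also record the elementary but necessary observation that the subgraph $G_i$ produced on processor $P_i$ is exactly $N_1(f^{*}(\mathcal{U}_i))$: since every $U \in \mathcal{U}_i$ satisfies $U \subseteq A_i$, one has $f^{-1}(U) \subseteq f^{-1}(A_i) = X_i$, so clustering $f^{-1}(U)$ inside $X_i$ produces the same path-connected components as clustering it inside $X$. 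Hence the algorithm's ``merge'' step is literally the quotient of a disjoint union appearing in Theorem \ref{induction step}.

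For the induction, the base case $N=1$ is immediate since the algorithm returns $G_1 = N_1(f^{*}(\mathcal{U}_1))$. For the inductive step I would split the space as $Y := f^{-1}(A_1 \cup \cdots \cup A_{N-1})$ and $Z := f^{-1}(A_N)$, so that $X = Y \cup Z$. The crucial use of the chain condition enters here: since $A_i \cap A_N = \emptyset$ for all $i \le N-2$, the overlap collapses to $Y \cap Z = f^{-1}(A_{N-1,N})$, whose finitely many path-connected components are exactly the clusters $C^{N-1}_{j_k}$ shared by $G_{N-1}$ and $G_N$. I would then apply Theorem \ref{induction step} to $Y,Z$ equipped with the pullback covers $f^{*}(\mathcal{W})$ and $f^{*}(\mathcal{U}_N)$, where $\mathcal{W} := \mathcal{U}_1 \cup \cdots \cup \mathcal{U}_{N-1}$. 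Since $f^{*}(\mathcal{W}) \cup f^{*}(\mathcal{U}_N) = f^{*}(\mathcal{U})$, the left-hand side of the theorem is precisely the sequential Mapper, while the right-hand side is $(N_1(f^{*}(\mathcal{W})) \sqcup G_N)/\!\sim$. By the induction hypothesis $N_1(f^{*}(\mathcal{W}))$ equals the merge of $G_1,\dots,G_{N-1}$, and the identification $\sim$ is exactly the one performed at the clusters of $f^{-1}(A_{N-1,N})$, so the right-hand side is the full distributed output, closing the induction.

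The main obstacle, and the one step I would treat with care, is verifying that $f^{*}(\mathcal{W})$ and $f^{*}(\mathcal{U}_N)$ are nerve-consistent as covers of $Y$ and $Z$: Lemma \ref{lemma} guarantees this only for two \emph{adjacent} covers, not for the accumulated cover $\mathcal{W}$. I would argue it directly from the chain structure. Condition (1) of Definition \ref{consistent} holds because $A_{N-1,N} \in \mathcal{U}_{N-1} \cap \mathcal{U}_N \subseteq \mathcal{W} \cap \mathcal{U}_N$, so each component of $f^{-1}(A_{N-1,N})$ lies in both covers. For condition (2), suppose $P \in f^{*}(\mathcal{W})$ and $Q \in f^{*}(\mathcal{U}_N)$ meet; writing $P$ as a component of $f^{-1}(U')$ with $U' \in \mathcal{U}_i$ ($i \le N-1$) and $Q$ as a component of $f^{-1}(U'')$ with $U'' \in \mathcal{U}_N$, the relation $\emptyset \neq P \cap Q \subseteq f^{-1}(A_i \cap A_N)$ forces $A_i \cap A_N \neq \emptyset$, hence $i = N-1$; the nerve-consistency between $\mathcal{U}_{N-1}$ and $\mathcal{U}_N$ from Lemma \ref{lemma} then yields $P = Q$ lying in $f^{-1}(A_{N-1,N})$. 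This is exactly the place where a spurious cross-edge could appear if $A_N$ were allowed to overlap a non-adjacent $A_i$, so it is where one must be most attentive. Once nerve-consistency is established, the remaining bookkeeping of vertices and edges is routine and the conclusion follows.
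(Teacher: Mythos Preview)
Your proof is correct and follows essentially the same inductive scheme as the paper: split off the last piece $A_N$, apply the induction hypothesis to $A_1\cup\cdots\cup A_{N-1}$, and merge using the two-unit result. If anything, you are more careful than the paper in one place: you explicitly verify that the accumulated pullback cover $f^{*}(\mathcal{W})$ and $f^{*}(\mathcal{U}_N)$ are nerve-consistent by invoking the chain condition to force $i=N-1$, whereas the paper simply asserts that $\mathcal{U}'$ and $\mathcal{U}_{k+1}$ share only $A'\cap A_{k+1}$ and that no other intersections occur, and then argues the vertex/edge bookkeeping directly rather than citing Theorem~\ref{induction step}.
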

\begin{proof}

We will prove that the parallel Mapper performs the computations on $X$ and correctly produces a graph $G$ that is identical to the graph obtained by the sequential Mapper algorithm using induction.

Denote by $N$ to the number of units of initial partitions of interval $I$, which is the same number of processing units. If $N = 1$, then the parallel Mapper works exactly like the sequential Mapper. In this case $A_1=X$ and the single cover $\mathcal{U}_1$ for $X$ is used to produce the final graph which  Algorithm \ref{mapper on N} returns at step $(3)$.


Now assume the hypothesis is true on $k$ unit, and then we show that it holds on $k+1$ units. In step $(1)$ and $(2)$ Algorithm \ref{mapper on N} constructs a $k+1$-chain cover for $[a,b]$ consisting of the open sets $A_1,...,A_k,A_{k+1}$. Denote by $\mathcal{U}_i$ to the cover of $A_i$ for $1\leq i\leq k+1 $. We can run Algorithm \ref{mapper on N} on the collection $\{(A_i,\mathcal{U}_i)\}_{i=1}^{k}$ and produce a sequential Mapper graphs $G_i$ $1\leq i \leq k $ in step (3). By the induction hypothesis, Algorithm \ref{mapper on N} produces correctly a graph $G^{\prime}$ obtained by merging the sequential Mapper graphs $G_1,...,G_k$. In other words the graph $G^{\prime}$ obtained from Algorithm \ref{mapper on N} is identical to the graph obtain by running the sequential Mapper construction on the cover $\cup_i^{k} \mathcal{U}i$.

Now we show that combining $G^{\prime}$ and $G_{k+1}$ using our algorithm produces a graph $G$ that is identical to running the sequential Mapper on the covering consists of $\cup_i^{k+1} \mathcal{U}i$. Let $\mathcal{U}^{\prime}$ be the union $\cup_i^{k} \mathcal{U}i$ and a denote by $A^{\prime}$ to the union $\cup_{i=1}^k A_i$. By the  construction of the covers $\{\mathcal{U}_i\}_{i=1}^{k+1}$ in step $(2)$, $\mathcal{U}^{\prime}$ covers $A^{\prime}$. Moreover, the covers $\mathcal{U}^{\prime}$ and $\mathcal{U}_{k+1}$ only share the open set $A^{\prime} \cap A_{k+1}$. This means there are no intersections between the open sets of the cover $\mathcal{U}^{\prime}$ and the open sets of the cover $\mathcal{U}_{k+1}$ except for $A^{\prime} \cap A_{k+1}$. Since there is no intersection between the open sets of $\mathcal{U}^{\prime}$ and $\mathcal{U}_{k+1}$ then there will be no creation of edges between the nodes induced from them and hence the computation of edges done on the first $k$ processors are independent from the computation of edges done on the $k+1$ possessor. Now we compare the node sets of the graphs $G^{\prime}$, $G_{k+1}$ and the graph $G$. Recall that each node in a sequential Mapper is obtained by a connected component of an inverse image of an open set in the cover that defines the Mapper construction. Since the covers $\mathcal{U}^{\prime}$ and $\mathcal{U}_{k+1}$ intersect at the open set $f^{-1}(A^{\prime}\cap A_{k+1})$ then each connected component of $f^{-1}(A^{\prime}\cap A_{k+1})$ corresponds to a node that exists in both graphs $G^{\prime}$ and $G_{k+1}$. 
This means that each connected component of $f^{-1}(A^{\prime}\cap A_{k+1})$ is processed twice : one time on the first $k$ processor and one time on the $k+1$ processors.For each such component corresponds to a node in both $G^{\prime}$ and $G_{k+1}$. In step $(5)$ the algorithm checks the graphs $G^{\prime}$ and $G_{k+1}$ for node duplication and merge them according to their correspondence to produce the graph $G$.
\end{proof}

\section{Experimentations}

In this section, we present practical results obtained using a Python implementation. We ran our experimentation on a Dell OptiPlex 7010 machine with 4-core i7-3770 Intel CPU @ 3.40GHz and with a 24GiB System Memory. The parallel Mapper algorithm was tested on different models and compared their run-time with a
publicly available data available at \cite{sumner2004deformation}. The size of the point cloud data are shown in Table~\ref{wrap-tab:1}. The size of datasets given in Table \ref{wrap-tab:1} is the number of points in the point cloud data.

\begin{table}[h!]
\centering
\begin{tabular}{ |p{1.2cm}||p{1.2cm}| }
 \hline
 Data &  Size\\
 \hline
 camel &21887\\
 cat & 7277 \\
 elephant & 42321 \\
 horse & 8431 \\
  face & 29299 \\
  head & 15941 \\
 \hline
\end{tabular}
\caption{The number of points for each dataset used in our tests.}
\label{wrap-tab:1}
\end{table}

The sequential Mapper algorithm relies on $3$ inputs: the data $X$, the scalar function $f:X \longrightarrow [a,b]$ and the choice of cover $\mathcal{U}$ of $[a,b]$. The existing publicly available Mapper implementations, see for instance \cite{mullner2013python}, do not satisfy the level of control that we require for the cover choice and so we relied on our own Mapper implementation. The clustering algorithm that we used to specify the Mapper nodes is a modified version of the DBSCAN \cite{calinski1974dendrite}. 
Using parallel Mapper on the data given in Table \ref{wrap-tab:1}, we obtained a remarkable speed up that can be $4$ times faster, compared with the sequential Mapper. Figure \ref{speedup}, shows the speedup results of parallel Mapper that are obtained using our experiments. The $x$-axis represents the number of processes while the $y$-axis shows the speedup. It is clear from the figure that the curves are increasing in a monotonic fashion as we increase the number of processes. Indeed, at the beginning the speed up increases significantly as we increase the number of processes. However, at some point (when we use more than 10 processes), we increase the number of processes to 30 processes and the speedup does not show significant improvement. 

\begin{figure}[h]
	\centering
    \includegraphics[width=9cm,height=5cm,keepaspectratio]{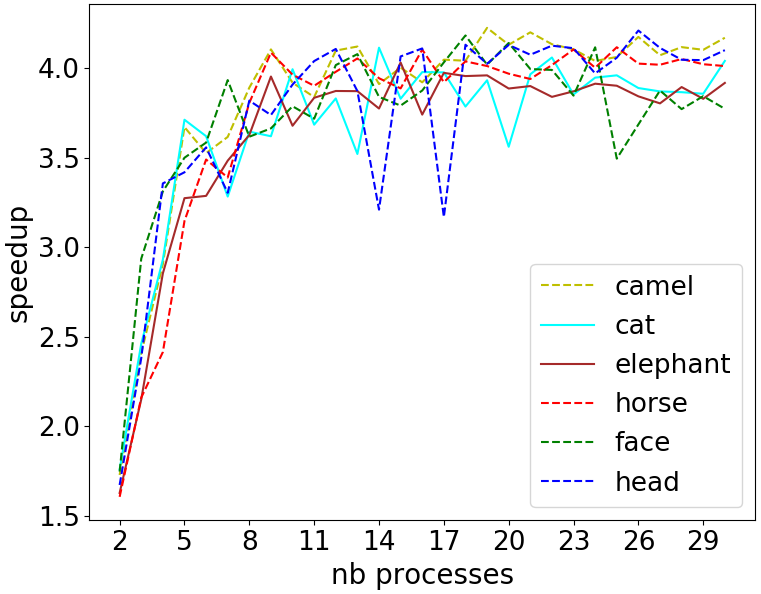}
	\caption{Speedups obtained by the parallel Mapper using number of processes that run concurrently.}
	\label{speedup}
\end{figure}

\subsection{Performance of the Algorithm}
\label{performance}

To verify our experimental results in Figure \ref{speedup}, we use a well-known theoretical formula which is the \textit{Amedahl's law} to calculate the {\em speedup} ratio upper bound that comes from parallelism and the improvement percentage \cite{amdahl1967validity}. The Amedahl's law is formulated as follows: 

{
\centering{${ \displaystyle
S = {\frac {1}{(1-part)+ part/N}}}$},\\
}
where $S$ is the theoretical speedup ratio, $part$ is the proportion of system or program that can be made in parallel, $1-part$ is the proportion that remains sequential, and $N$ is the number of processes.

Generally, there are some systems and applications where parallelism cannot be applied on all data or processes. In this case, part of data can be processed in parallel, while the other should be sequential. This may happen because of the nature of data (e.g. dependencies), the natures of processes (e.g. heterogeneity) or some other factors.

In the parallel Mapper, there are two computational pieces which are the clustering piece and the cover construction/merging subgraphs piece. Our algorithm makes the clustering piece completely in parallel while the cover construction/merging subgraphs piece is processed sequentially. Now, we use Amedahl's law to calculate the theoretic speedup ratios to verify the experimental results. Indeed, considering the algorithm, the clustering piece is approximately 75\% of execution time, while the cover construction/merging subgraphs piece is about 25\%.


In Table \ref{table:1}, we use Amedahl's law to calculate the theoretic speedup ratios using different numbers of processes. The table shows that the speedup increases as a response of the increase in the number of processes. Notice that at some points the performance almost stops improving even if we increase the number of processes. Table \ref{table:1} shows that the speedup of $part = .75$ (the parallel Mapper) achieves to $3.07$ when $N=10$ and it goes up to $3.99$ when $N=1000$. Therefore, the theoretical calculations clearly matches the experimental results that appears in Figure \ref{speedup}.

\begin{table}[H]
\centering
\begin{tabular}{ |p{1cm}||p{2.5cm}| }
 \hline
 \multicolumn{2}{|c|}{Speedup} \\
 \hline
 N &  \textsf{(Parallel Mapper)} part=0.75 \\
 \hline
 10 & 3.07 \\
 100 & 3.88 \\
 1000 & 3.99 \\
 10000 & 3.99 \\
 \hline
\end{tabular}
 \vspace{1pt}
\caption{Speedup calculations based on Amedahl's law, using different numbers of processes. It shows the speedup of the parallel Mapper with respect to the sequential Mapper}
\label{table:1}
\end{table}

\section{Conclusion and Future Work}
\label{future}
In this work we gave a provably correct algorithm to distribute Mapper on a set of processors and run them in parallel. Our algorithm relies on a divide an conquer strategy for the codomain cover which gets pulled back to the domain cover. This work has several potential directions of the work that we have not discussed here. For instance, the recursive nature of the main algorithm was implied throughout the paper but never discussed explicitly. On the other hand the algorithm can be utilized to obtain a multi-resolution Mapper construction. In other words, using this algorithm we have the ability to increase the resolution of Mapper for certain subsets of the data and decrease at others. This is potentially useful for interactive Mapper applications. 

\bibliography{refs_2}

\end{document}